\documentclass[11pt]{article}

\usepackage[preprint]{acl}

\usepackage{times}
\usepackage{latexsym}

\usepackage{amsmath,amsthm,amssymb,amsfonts}
\usepackage{algorithm,algorithmic}
\usepackage{subcaption}
\usepackage{wrapfig} % add this in your preamble
\usepackage{booktabs}
\usepackage{adjustbox}

\newtheorem{theorem}{Theorem}%[section]

\usepackage{multirow}
\usepackage{tcolorbox}
\definecolor{lightgray}{gray}{0.9}
\definecolor{headergray}{gray}{0.8}
\definecolor{darkgray}{gray}{0.6}
\usepackage{colortbl}
\usepackage{pifont}
\newcommand{\cmark}{\ding{51}}%
\newcommand{\xmark}{\ding{55}}
\usepackage[T1]{fontenc}
\usepackage[utf8]{inputenc}

\usepackage{microtype}

\usepackage{inconsolata}

\usepackage{graphicx}

\title{Reservoir of Importance: Learning Semi-Structured Sparsity with Differentiable Subset Sampling}

\author{
 \textbf{Ha Dinh\textsuperscript{1}} \thanks{Equal contribution.} \enspace
 \textbf{Xuan Duy Ta\textsuperscript{1}} \footnotemark[1] \enspace
 \textbf{Khoat Than\textsuperscript{2}} \enspace
 \textbf{Khac-Hoai Nam Bui\textsuperscript{1}} \thanks{Corresponding author.}
\\
 \textsuperscript{1}Viettel AI, Viettel Group, Vietnam \\
 \textsuperscript{2}Hanoi University of Science and Technology, Hanoi, Vietnam
\\
 {\small
 \texttt{\{hadv17,duytx\}@viettel.com.vn}, \texttt{khoattq@soict.hust.edu.vn}, \texttt{nambkh@viettel.com.vn}
 }
}

\begin{document}
\maketitle
\begin{abstract}
Semi-structured $N$:$M$ sparsity has emerged as a practical direction for accelerating large language models (LLMs). However, existing learnable-mask approaches incur substantial parameter and memory overhead, limiting their scalability to large models and aggressive sparsity regimes. In this work, we revisit semi-structured pruning from a perspective that reconciles efficiency with scalability. We propose Reservoir of Importance (RoI)~\footnote{The source code is publicly available at: \url{https://github.com/havisdino/roi}}, a lightweight semi-structured pruning framework that learns sparsity masks through differentiable subset sampling. Unlike prior methods that model full categorical distributions over all feasible $N$:$M$ patterns, RoI introduces a compact-logit parameterization for sparsity mask learning and performs sampling without replacement to select masks, thereby reducing trainable parameters from combinatorial complexity to $\mathcal{O}({M})$. As a result, RoI requires 1.5--8.75$\times$ fewer learnable parameters and significantly lower memory cost, while remaining fully aligned with hardware-friendly sparsity patterns. Extensive evaluations across multiple scales of the Qwen2.5 LLM family (0.5-7B parameters) demonstrate that RoI achieves competitive performance with strong memory efficiency, stability, and scalability to more aggressive $N$:$M$ sparsity patterns, offering a practical path toward efficient LLM deployment.
\end{abstract}

\section{Introduction}
\begin{figure}[!t]
  \centering
  \includegraphics[width=\columnwidth]{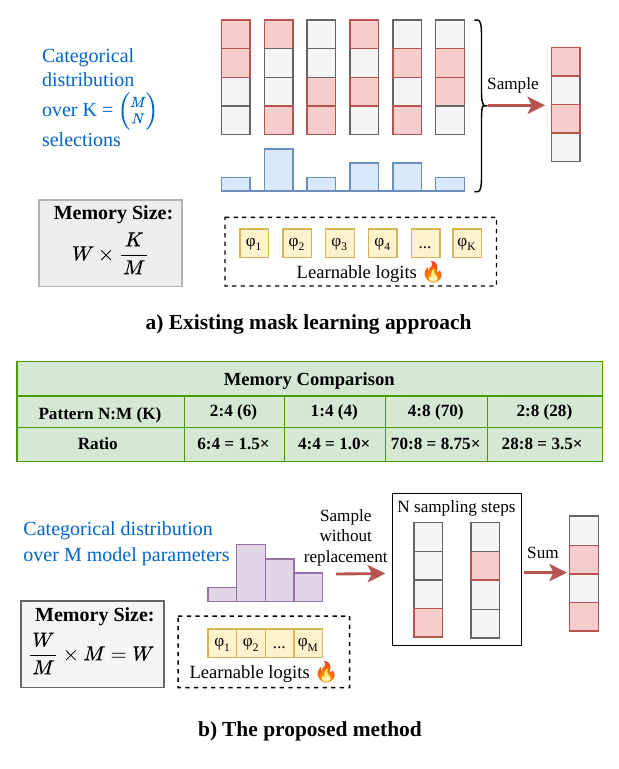}
  \caption{Learnable semi-structured $N$:$M$ sparsity methods: a) modeling the mask selection process using a categorical distribution over feasible masks, and b) our proposed method by learning to sample subsets without replacement of model parameters. The proposed method is more memory efficient than previous works for most practical $N$:$M$ sparsity patterns. The memory advantage becomes more pronounced as $M$ increases or when $N$ is around $M/2$.}
  \label{fig1}
\end{figure}
Pruning techniques play a central role in developing sparse LLMs that reduce memory footprint and improve inference efficiency. Post-training pruning methods generally fall into three categories:
(i) unstructured pruning, which removes individual weight parameters without considering architectural constraints~\cite{Sun0BK24};
(ii) structured pruning, which removes entire components such as neurons, attention heads, or layers~\cite{XiaGZ024, LeDWW0Y025}; and
(iii) semi-structured pruning, which balances flexibility and regularity by enforcing hardware-friendly sparsity patterns such as $N$:$M$ sparsity~\cite{FangYMHPK0W24, HuangHJZC25}.

In this work, we focus on semi-structured pruning because it removes redundant weights while maintaining regular sparsity patterns that are compatible with modern accelerators. In particular, enforcing $N$:$M$ sparsity patterns~\cite{HubaraCIBNS21} provides principled trade-offs between sparsity flexibility and hardware efficiency, enabling practical inference acceleration.

Recent progress in semi-structured pruning has increasingly focused on learnable-mask approaches that directly optimize $N$:$M$ sparsity patterns through gradient-based training. Representative methods (e.g., MaskLLM~\cite{FangYMHPK0W24} and HyperPrune~\cite{sun2026learning}) formulate each pruning group as a categorical distribution over all feasible $N$:$M$ configurations, which demonstrate strong pruning performance and generalization. However, the major challenge of this approach is the combinatorial parameterization required to model a multinomial distribution of size $\binom{M}{N}$ for each weight group. Specifically, as $M$ increases, this design imposes substantial memory and parameter overhead, as demonstrated in Figure~\ref{fig1}~(a).
% Moreover, adaptive sparse-training variants require repeated regrowth cycles or dense mask structures, further amplifying training cost and complexity.
Motivated by this limitation, we propose a lightweight alternative termed Reservoir of Importance (RoI), which replaces full categorical modeling with differentiable subset sampling. Specifically, instead of parameterizing the probability of every possible $N$:$M$ pattern, RoI samples $N$ entries without replacement from a compact logit vector of size $M$ using Weighted Reservoir Sampling and Gumbel-Top-$K$ relaxation. This reduces the learnable mask parameters from  $\mathcal{O}(\binom{M}{N})$ to a simple $\mathcal{O}({M})$ per group while retaining differentiability and hardware alignment, as shown in Figure~\ref{fig1}~(b). By avoiding combinatorial distributions and relying on efficient subset-sampling dynamics, the proposed method enables scalable, stable, and memory-efficient mask learning for semi-structured pruning of modern LLMs. 

\section{Related Work}
% Pruning LLMs is a critical optimization technique that removes less significant or redundant parameters, such as weights or neurons, from the neural network architecture. This process reduces model size, computational complexity, and memory requirements, thereby improving inference speed and enabling deployment on resource-constrained devices. Pruning methods for LLMs are broadly categorized into structured, unstructured, and semi-structured approaches, each with distinct characteristics and trade-offs~\cite{ChengZS24a}. 

% \textbf{Structured pruning} involves the elimination of entire architectural components, such as layers or attention heads, to improve computational efficiency~\cite{AshkboosCNHH24, XiaGZ024, AnZYTW24, LiuYCZWDD25, LeDWW0Y025}. This approach simplifies the model structure, making it more amenable to hardware optimization. However, it frequently results in substantial performance degradation, necessitating extensive retraining to restore model functionality. 

% \textbf{Unstructured pruning} targets individual weights based on their significance, enabling high performance even at elevated sparsity levels~\cite{DongLTLP0024, Sun0BK24}. Despite its efficacy in preserving model accuracy, the irregular sparsity patterns produced are often incompatible with hardware acceleration, limiting its practical applicability in deployment scenarios. 
% \subsection{Semi-structured pruning}

Semi-structured pruning has emerged as an effective compromise between structured pruning~\cite{AnZYTW24, LiuYCZWDD25} and unstructured pruning~\cite{DongLTLP0024, Sun0BK24}. By enforcing regular sparsity patterns (i.e., $N$:$M$ sparsity), semi-structured pruning enables efficient hardware acceleration while preserving model accuracy~\cite{HubaraCIBNS21}.

\subsection{Saliency-Based Methods}

Saliency-based approaches, including SparseGPT \cite{FrantarA23} and Wanda~\cite{Sun0BK24}, rely on a small calibration dataset, typically a subset of the pre-training data, to approximate the knowledge encoded in a pre-trained language model. These methods estimate the importance of individual weights or weight groups using criteria such as weight magnitude, gradient information, or second-order statistics (e.g., Hessian approximations inspired by Optimal Brain Damage~\cite{CunDS89} and Optimal Brain Surgeon~\cite{HassibiS92}), and prune parameters accordingly. Although computationally efficient, the effectiveness of these importance estimates strongly depends on the underlying saliency criterion and the representativeness of the calibration data. In practice, the limited calibration set may fail to adequately capture the rich and diverse knowledge embedded in large language models, potentially leading to suboptimal pruning decisions.

\subsection{Learning-Based Methods}

Learning-based semi-structured pruning has recently gained attention due to its ability to optimize pruning decisions directly through gradient-based training, rather than relying on handcrafted importance metrics. Under the $N$:$M$ sparsity constraint, the objective is to retain exactly $N$ non-zero weights within each group of $M$ parameters, thereby producing hardware-friendly sparsity patterns while minimizing performance degradation~\cite{ZhouMZLZYSL21}. A prominent line of work formulates mask learning as an optimization problem over categorical distributions that enumerate all feasible $N$:$M$ configurations. For example, MaskLLM~\cite{FangYMHPK0W24} models each group's pruning mask as a multinomial distribution and employs Gumbel-Softmax sampling to enable differentiable training. This approach achieves strong pruning performance and good generalization across tasks; however, it incurs substantial computational and memory overhead due to the $\mathcal{O}(\binom{M}{N})$ parameterization per group. On the other hand, ProxSparse~\cite{LiuSJPHS0K25} reduces training complexity via regularized optimization and smaller learning samples. Despite these improvements, challenges related to large-scale learning efficiency and performance generalization remain open.

In contrast to prior work, our study follows a large-scale learning-based paradigm while substantially reducing the parameterization cost of mask learning. Specifically, we improve the categorical formulation with an $\mathcal{O}({M})$ parameterization per group by leveraging weighted reservoir sampling. This design achieves significant memory savings while retaining high pruning accuracy, making it well-suited for large-scale semi-structured pruning, as detailed in the following section.

\section{Preliminaries}
\subsection{Weighted Reservoir Sampling}
Weighted Reservoir Sampling (WRS)~\cite{EfraimidisS06} is an extension of the Reservoir Sampling class of algorithms~\cite{Vitter85}, which aims to sample $K$ elements from a population of size $L$. In WRS, each item is associated with a positive weight, and items with larger relative weights are preferentially sampled. Formally, consider a finite population $\mathcal{X} = \{x_1, x_2, \ldots, x_L\}$ equipped with weights $\boldsymbol{w} = [w_1, w_2, \ldots, w_L]$, WRS produces an ordered subset $\mathcal{Y} = \{y_1, y_2, \ldots, y_K\}$ given by:
\begin{equation}
\label{eq:orig-wrs}
\begin{aligned}
      \Pr_\text{WRS} (\mathcal{Y}|\boldsymbol{w}) = \frac{w_{y_1}}{W} \times \frac{w_{y_2}}{W - w_{y_1}} \times \ldots 
      \\
      \times \frac{w_{y_K}}{W - \sum_{j=1}^{K-1} w_{y_j}},
\end{aligned}
\end{equation}
where $W = \sum_{i=1}^L w_i$ denotes the total weight and $w_{y_i}$ is the weight of element $y_i$. This distribution corresponds to a without-replacement sampling process, where the probability of selecting a subset is proportional to its item weights.

% \subsection{Gumbel-Max Trick}
\subsection{Gumbel-Top-$K$ Trick}
Gumbel-Max~\cite{gumbel1954statistical} can be viewed as a monotonic transformation of the WRS technique, providing a reparameterization trick to sample from a categorical distribution by perturbing the log-probabilities with Gumbel noise. Consider a categorical distribution over $\{x_1, \ldots, x_L\}$, parameterized by logits $\boldsymbol{\phi} = [\phi_1, \ldots, \phi_L]$, with probabilities $\pi_i = \exp(\phi_i)/{\sum_{j=1}^L \exp(\phi_j)}$. 
The Gumbel-Max trick samples from this distribution by first associating each item with a random key obtained via Gumbel perturbation:
\begin{equation}
    \kappa_i = \phi_i + g_i, \quad g_i \overset{\text{i.i.d}}{\sim} \operatorname{Gumbel}(0,1),
\end{equation}
where each $g_i$ is independently drawn from the $\operatorname{Gumbel}(0,1)$ distribution. The sampler outputs the item $x_j$ having the largest key $\kappa_j$. The index $j$ is the output of taking $\operatorname{argmax}$ over every perturbed key values ($j = {\operatorname{argmax}_{i}}\ \kappa_i$).

Gumbel-Top-$K$~\cite{XieE19} is a generalization of the Gumbel-Max trick, selecting the top-$K$ items with the highest keys, rather than just the maximum. This corresponds to drawing $K$ items without replacement from a categorical distribution over $L$ candidates. By replacing the $\operatorname{argtop}_K$ operator with a sequence of $\operatorname{softmax}$ relaxations~\cite{Plotz018}, the sampling process becomes differentiable, thereby enabling end-to-end learning via backpropagation.

To sample a size-$K$ subset using the Gumbel-Top-$K$ trick, logits are first independently perturbed with Gumbel noise to form random keys $\kappa_i$, analogously to the Gumbel-Max trick. Subsequently, a sequence of $\operatorname{softmax}$ operations is applied to generate differentiable approximations of the selected one-hot indicators. Let $\boldsymbol{\alpha}^{(k)} = [\alpha_1, \ldots, \alpha_L]$ denote adjusted keys at sampling step $k$, which are defined recursively as follows:
\begin{equation}
    \begin{aligned}
    \boldsymbol{\alpha}^{(1)} &= [\kappa_1, \ldots, \kappa_L], \\
    \boldsymbol{\alpha}^{(k)} &= \boldsymbol{\alpha}^{(k-1)} + \log(1 - \boldsymbol{\mu}^{(k-1)}),
\end{aligned}
\end{equation}
where $\boldsymbol{\mu}^{(k-1)} = [\mu^{(k-1)}_1, \ldots, \mu^{(k-1)}_L]$ is the relaxed one-hot indicator of the item selected at step $k-1$. This representation is achieved by applying a $\operatorname{softmax}$ with temperature $\tau$ to the adjusted keys:
\begin{equation}
    \mu^{(k-1)}_i = \frac{\exp(\alpha^{(k-1)}_i / \tau)}{\sum_{j=1}^L \exp(\alpha^{(k-1)}_j / \tau)}.
\end{equation}
After $K$ iterations, the procedure produces an ordered collection of relaxed one-hot vectors $\mathcal{S} = \{ \boldsymbol{\mu}^{(1)}, \ldots, \boldsymbol{\mu}^{(K)} \}$. Summing these vectors yields a soft $K$-hot representation, and the mapping from logits $\phi_i$ to this representation is differentiable, enabling gradient-based training.

\section{Methodology}
\begin{figure*}[t]
  \includegraphics[width=\textwidth]{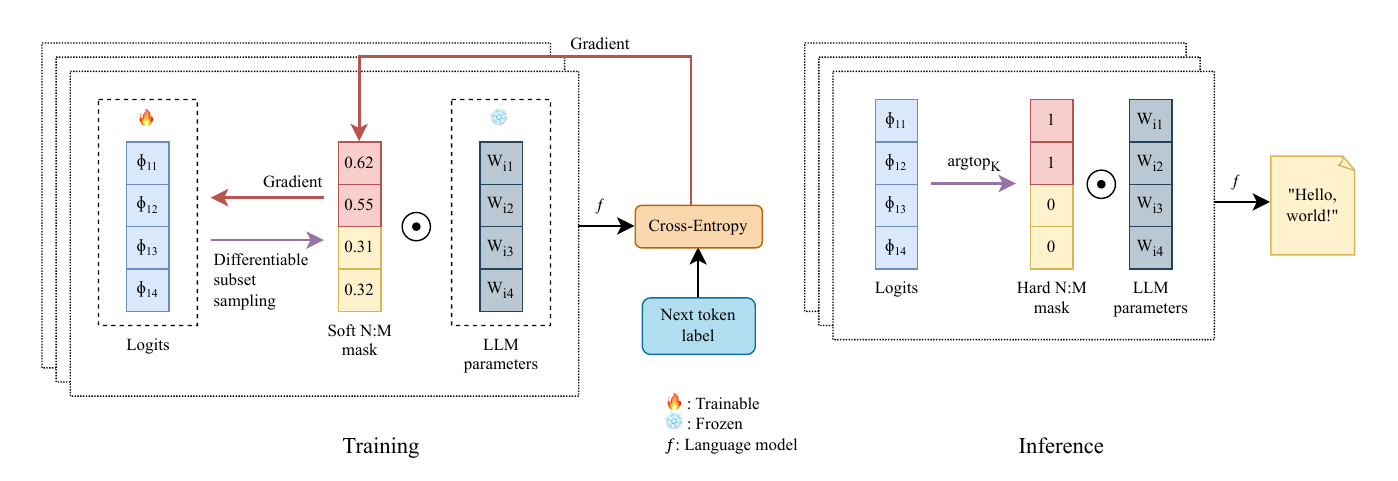}
  \caption{Overview of the RoI framework for semi-structure sparsity learning via differentiable subset sampling, illustrating the training and inference phases.}
  \label{fig:main}
\end{figure*}
\subsection{Problem Statement}
The task of determining the optimal $N$:$M$ sparsity pattern can be formulated as selecting, for each contiguous group of $M$ parameters, a length-$M$ binary mask containing exactly $N$ non-zero entries that minimizes the calibration loss. Let $G$ be the number of such groups, $\mathbf{W} = \{ \mathbf{w}_1, \ldots, \mathbf{w}_G \}$ the corresponding parameter groups, and $\mathbf{M} = \{ \mathbf{m}_1, \ldots, \mathbf{m}_G \}$ their associated masks. The resulting optimization problem is:
\begin{equation}
\label{eq:definition}
    \mathbf{M}^* = \underset{\mathbf{M}}{\operatorname{argmin}}\ \mathcal{L}_\text{CE}(\mathcal{D}; \mathbf{W} \odot \mathbf{M}).
\end{equation}
Here, $\mathcal{L}_\text{CE}$ denotes the cross-entropy loss for language modeling, $\mathcal{D}$ is the calibration dataset, and $\odot$ represents the element-wise multiplication between each weight group and its corresponding mask. This optimization problem is NP-hard due to the combinatorial search space, containing $\binom{M}{N}^G$ feasible configurations. For LLMs, the number of weight groups $G$ is extremely large, rendering exhaustive search intractable. Therefore, in the following section, we reformulate the objective as a stochastic variational optimization problem to obtain a tractable and efficient approximation.

\subsection{Proposed Method}
An overview of the proposed RoI method is presented in Figure~\ref{fig:main}.
Stochastic variational optimization~\cite{Bird018Stochastic} relies on the observation that for any arbitrary distribution $q(x)$, the expected value of a function $f(x)$ provides an upper bound for its minimum: 
\begin{equation}
    \min_x f(x) \leq \mathbb{E}_{q(x)} [f(x)].
\end{equation}
By modeling pruning masks as random variables, the optimization problem in Equation~\ref{eq:definition} can be reframed as minimizing a variational upper bound of the objective with respect to the variational distribution parameters. Specifically, we solve:
\begin{equation}
    \boldsymbol{\Phi}^* = \underset{\boldsymbol{\Phi}}{\operatorname{argmin}}\ \mathbb{E}_{\Pr(\mathbf{M}|\boldsymbol{\Phi})} [ \mathcal{L}_\text{CE}(\mathcal{D}; \mathbf{W} \odot \mathbf{M}) ],
    \label{eq:optimize}
\end{equation}
where $\boldsymbol{\Phi} = \{ \boldsymbol{\phi}_1, \ldots, \boldsymbol{\phi}_G \}$ parameterizes independent variational factors $\Pr(\mathbf{m}_1 | \boldsymbol{\phi}_1)$, \ldots, $\Pr(\mathbf{m}_G | \boldsymbol{\phi}_G)$, with joint distribution $\Pr(\mathbf{M}|\boldsymbol{\Phi}) = \prod_{i=1}^G \Pr(\mathbf{m}_i | \boldsymbol{\phi}_i)$. Under this stochastic variational formulation problem, mask sampling can be reparameterized and continuously relaxed into a differentiable mapping with respect to $\boldsymbol{\Phi}$, making it possible to learn through gradient-based methods.

\subsubsection{Variational Distribution Selection}
Since pruning masks are $N$-hot vectors of length $M$, each mask can take one of $\binom{M}{N}$ possible values. Modeling an unconstrained distribution over all possible values therefore requires $\binom{M}{N} - 1$ free parameters, which grows combinatorially with $M$. To efficiently learn masks with a manageable number of parameters, we instead model mask distributions $\Pr(\mathbf{m}_i | \boldsymbol{\phi}_i)$ using the WRS distribution (Equation~\ref{eq:orig-wrs}) over ordered subsets.
Let $\mathcal{S}_i = \{ \boldsymbol{\mu}_i^{(1)}, \ldots, \boldsymbol{\mu}_i^{(N)} \}$ be a set of $N$ one-hot vectors corresponding to the selected weights within the $i$-th group, sampled from the WRS distribution using the Gumbel-Top-$K$ trick. The probability of a mask $\mathbf{m}_i$ is then:
\begin{equation}
\label{eq:full_dist_definition}
    \Pr(\mathbf{m}_i | \boldsymbol{\phi}_i) = \sum_{\mathcal{S}_{\mathbf{m}_i}} \Pr_\text{WRS}(\mathcal{S}_{\mathbf{m}_i} | \exp(\boldsymbol{\phi}_i)),
\end{equation}
where $\mathcal{S}_{\mathbf{m}_i}$ denotes the collection of ordered subsets whose elements sum to $\mathbf{m}_i$, i.e., $\mathbf{m}_i = \sum_{j=1}^{N} \boldsymbol{\mu}_i^{(j)}$.
Exactly computing this probability is expensive and unnecessary, as the construction of $\mathbf{m}_i$ discards the ordering of items in the sampled subset. Instead, the expected loss can be computed via Monte Carlo sampling. Under this formulation, the original problem reduces to learning importance scores $\exp(\phi_{ij})$ to select salient weights so as to minimize the objective. 
Denoting the WRS distribution of an arbitrary subset $\mathcal{S}_i$ by $\Pr_\text{WRS}(\mathcal{S}_i | \boldsymbol{\phi}_i)$ for brevity, the optimization problem in Equation~\ref{eq:optimize} can be reformulated as follows:
\begin{equation}
    \boldsymbol{\Phi}^* = \underset{\boldsymbol{\Phi}}{\operatorname{argmin}}\ \mathbb{E}_{\Pr_\text{WRS} (\mathcal{S} | \boldsymbol{\Phi})} [ \mathcal{L}_\text{CE}(\mathcal{D}; \mathbf{W} \odot \mathbf{M}) ],
\end{equation}
where $\mathcal{S} = \{ \mathcal{S}_1, \ldots, \mathcal{S}_G\}$ is a collection of $G$ subsets sampled from the joint distribution $\Pr_\text{WRS}(\mathcal{S} | \boldsymbol{\Phi}) = \prod_{i=1}^G \Pr_\text{WRS} (\mathcal{S}_i | \boldsymbol{\phi}_i)$. Each $N$-hot pruning mask $\mathbf{m}_i$ in the mask collection $\mathbf{M}$ is constructed as $\mathbf{m}_i = \sum_{\boldsymbol{\mu}_i^{(j)} \in \mathcal{S}_i} \boldsymbol{\mu}_i^{(j)}$. Parameterizing masks as sums of subsets sampled from WRS-restricted distributions yields the same expected loss as sampling masks from the target distributions, as formalized in Theorem~\ref{thm:restricted-sampling}. Our approach reduces parameter complexity by viewing $N$-hot sampling as a sequential sampling without replacement process. Rather than maintaining a full categorical distribution over all $\binom{M}{N}$ possible configurations, we instead model only a single categorical distribution over the $M$ model parameters, requiring exactly $M$ parameters independent of $N$. The proposed method achieves a reduction in parameter complexity from $\mathcal{O}(\binom{M}{N})$ to $\mathcal{O}({M})$, yielding an exponential improvement in memory efficiency.

\subsubsection{Mask Selection Relaxation}
To enable differentiation of the objective with respect to the variational distributions' parameters, we relax the discrete sampling process using the Gumbel-Top-$K$ trick. Given logits $\boldsymbol{\phi}_i = [\phi_{i1}, \ldots, \phi_{iM}]$ forming a categorical distribution over the $M$ consecutive model weights within the $i$-th group, the probability of selecting the $j$-th weight is achieved through the $\operatorname{softmax}$ function, $\pi_{ij} = \exp(\phi_{ij}) / \sum_{k=1}^M \exp(\phi_{ik})$.
To sample a subset $\mathcal{S}_i$ without replacement from this distribution, we first perturb the logits with independent Gumbel noise to obtain random keys:
\begin{equation}
    \kappa_{ij} = \phi_{ij} + g_{ij}, \,\,\, g_{ij} \overset{\text{i.i.d}}{\sim} \operatorname{Gumbel}(0,1).
\end{equation}
We denote the adjusted keys for the $i$-th group at sampling step $k$ by $\boldsymbol{\alpha}_i^{(k)} = [\alpha_{i1}^{(k)}, \ldots, \alpha_{iM}^{(k)}]$.
The update rule is defined recursively. At each subsequent step $k$, the adjusted keys are updated by accumulating a correction term based on the previously selected probabilities as follows:
\begin{equation}
\begin{aligned}
    \boldsymbol{\alpha}_i^{(1)} &= [\kappa_{i1}, \ldots, \kappa_{iM}],
    \\
    \boldsymbol{\alpha}_i^{(k)} &= \boldsymbol{\alpha}_i^{(k-1)} + \log(1 - \boldsymbol{\mu}_i^{(k-1)}).
\end{aligned}
\end{equation}
Finally, a differentiable relaxed one-hot vector $\boldsymbol{\mu}_i^{(k)} = [\mu_{i1}^{(k)}, \ldots, \mu_{iM}^{(k)}]$, representing the selected item at the $k$-th sampling step, is achieved by applying a $\operatorname{softmax}$ operation to the adjusted keys with temperature $\tau$ > 0:
\begin{equation}
    \mu_{ij}^{(k)} = \frac{\exp(\alpha_{ij}^{(k)} / \tau)}{\sum_{k=1}^M \exp(\alpha_{ik}^{(k)} / \tau)}.
\end{equation}
After $N$ sampling steps, a set of soft one-hot vectors representing selected weights is obtained. By summing up these vectors, a continuous relaxation of the $N$-hot pruning mask can be constructed, enabling gradient-based optimization of the objective.

\subsubsection{Temperature Annealing}
Having previously defined the temperature hyperparameter $\tau$ to control the sharpness of one-hot approximations, we now introduce a separate hyperparameter $\lambda$ to explicitly govern the level of stochasticity in the sampling process. This is realized by applying the Gumbel-Top-$K$ trick to the scaled logits, $\overline{\boldsymbol{\phi}}_i = \boldsymbol{\phi}_i / \lambda$, such that larger values of $\lambda$ induce higher sampling randomness.
In our experiments, we implement annealing schedules for both $\tau$ and $\lambda$ to progressively guide the mask learning process, starting with high randomness to encourage broad exploration and gradually converging to a small set of high-confidence solutions toward the end of training. Specifically, we adopt exponential annealing schedules: at the $t$-th training step, the temperatures are:
\begin{equation}
\label{eq:annealing}
\begin{aligned}
    \tau_t &= \max \left\{ \tau_\text{end}, \tau_\text{init} \times \left( \frac{\tau_\text{end}}{ \tau_\text{init}} \right) ^\frac{t}{T_\text{anneal}} \right\} ,\\
    % \tau_t &= \tau_\text{init} + \frac{t}{T} \left(\tau_\text{end} - \tau_\text{init}\right), % \times \left(1 - \frac{t}{T}\right) + \tau_\text{end} \times \frac{t}{T},
    \\
    \lambda_t &= \max \left\{ \lambda_\text{end}, \lambda_\text{init} \times \left( \frac{\lambda_\text{end}}{ \lambda_\text{init}} \right) ^\frac{t}{T_\text{anneal}} \right\},
    % \lambda_t &= \lambda_\text{init} \times \left(1 - \frac{t}{T}\right) + \lambda_\text{end} \times \frac{t}{T},
\end{aligned}         
\end{equation}
with $T_\text{anneal}$ being the number of annealing steps.

\section{Experiment}
\begin{table*}[!t]
\centering
\begin{adjustbox}{max width=\textwidth}
% \begin{tabular}{l|c|cccccc|c}
\begin{tabular}{lccccccccc}
\toprule
\textbf{Method}                   & \textbf{W/U} & \textbf{ARC-C} & \textbf{ARC-E} & \textbf{BoolQ} & \textbf{HellaS.} & \textbf{PIQA}  & \textbf{RACE}  & \textbf{SciQ}  & \textbf{Average} $\uparrow$ \\
\midrule
\rowcolor{headergray}
\textbf{Base Model: Qwen2.5-0.5B} & -               & 29.01          & 64.48          & 61.80          & 40.54            & 70.51          & 34.74          & 92.90          & 56.28            \\
Magnitude                         & \xmark          & 19.11          & 31.19          & 38.38          & 26.65            & 54.13          & 23.06          & 43.40          & 33.70            \\
Wanda                             & \xmark          & 19.80          & 42.85          & 56.82          & 28.93            & 59.03          & 26.70          & 83.20          & 45.33            \\
SparseGPT                         & \cmark          & 19.97          & 47.22          & 59.85          & 30.80            & 60.12          & 27.85          & 84.30          & 47.16            \\
ProxSparse                        & \xmark          & 21.42          & 46.63          & \underline{61.63}    & 32.00            & 61.63          & 29.09          & 79.80          & 47.46            \\
MaskLLM                           & \xmark          & \underline{22.16}    & \underline{54.02}    & \textbf{62.10} & \underline{32.28}      & \underline{63.41}    & \underline{30.19}    & \underline{85.90}    & \underline{50.01}      \\
\textbf{RoI (Ours)}              & \xmark          & \textbf{23.63} & \textbf{54.97} & 60.95          & \textbf{32.34}   & \textbf{63.82} & \textbf{31.00} & \textbf{87.20} & \textbf{50.56}   \\
\midrule
\rowcolor{headergray}
\textbf{Base Model: Qwen2.5-1.5B} & -               & 41.55          & 75.21          & 72.48          & 50.18            & 75.52          & 36.65          & 94.20          & 63.68            \\
Magnitude                         & \xmark          & 20.48          & 32.74          & 39.42          & 27.20            & 57.07          & 24.59          & 60.10          & 37.37            \\
Wanda                             & \xmark          & 24.06          & 51.26          & \textbf{63.64} & 32.36            & 62.46          & 29.28          & 88.30          & 50.19            \\
SparseGPT                         & \cmark          & 28.33          & 56.69          & \underline{62.66}    & 34.54            & 65.07          & \textbf{33.49} & 89.00          & 52.83            \\
ProxSparse                        & \xmark          & 29.25          & 59.55          & 42.87          & \textbf{39.61}   & 67.25          & 32.73          & \underline{89.80}    & 51.58            \\
MaskLLM                           & \xmark          & \textbf{29.45} & \textbf{63.51} & 61.33          & 39.10            & \underline{69.07}    & \underline{32.51}    & \textbf{89.90} & \underline{54.98}      \\
\textbf{RoI (Ours)}              & \xmark          & \underline{29.37}    & \underline{63.01}    & 61.93          & \underline{39.29}      & \textbf{69.31} & 32.44          & \textbf{89.90} & \textbf{55.04}   \\
\midrule
\rowcolor{headergray}
\textbf{Base Model: Qwen2.5-3B}   & -               & 44.62          & 77.57          & 77.22          & 55.04            & 78.29          & 38.47          & 95.90          & 66.73            \\
Magnitude                         & \xmark          & 23.81          & 24.66          & 37.83          & 25.98            & 51.96          & 20.77          & 18.50          & 29.07            \\
Wanda                             & \xmark          & 27.13          & 60.02          & 62.26          & 35.88            & 67.14          & 34.35          & 91.00          & 53.97            \\
SparseGPT                         & \cmark          & 31.40          & 66.12          & 62.66          & 41.05            & 69.75          & 35.12          & \underline{92.50}    & 56.94            \\
ProxSparse                        & \xmark          & -              & -              & -              & -                & -              & -              & -              & -                \\
MaskLLM                           & \xmark          & \textbf{32.95} & \underline{66.42}    & \underline{65.11}    & \underline{43.18}      & \underline{70.44}    & \textbf{35.35} & \textbf{93.00} & \textbf{58.06}   \\
\textbf{RoI (Ours)}              & \xmark          & \underline{32.08}    & \textbf{66.71} & \textbf{65.26} & \textbf{43.30}   & \textbf{70.57} & \underline{35.22}    & \textbf{93.00} & \underline{58.02}      \\
\midrule
\rowcolor{headergray}
\textbf{Base Model: Qwen2.5-7B}   & -               & 48.21          & 80.43          & 84.65          & 59.98            & 78.78          & 41.91          & 96.60          & 70.08            \\
Magnitude                         & \xmark          & 24.40          & 38.34          & 64.16          & 30.05            & 55.60          & 23.25          & 74.30          & 44.30            \\
Wanda                             & \xmark          & 38.99          & 72.18          & 73.21          & 44.53            & 71.87          & 35.79          & 94.50          & 61.58            \\
SparseGPT                         & \cmark          & \textbf{39.65} & 73.54          & 75.83          & 45.89            & 73.38          & 36.67          & 94.80          & 62.82            \\
ProxSparse                        & \xmark          & 38.69          & \underline{74.46}    & 74.77          & 48.63            & 73.22          & 36.08          & \textbf{95.60} & 63.06            \\
MaskLLM                           & \xmark          & \underline{39.27}    & 73.68          & \underline{81.55}    & \textbf{49.12}   & \underline{74.88}    & \underline{37.46}    & \underline{95.10}    & \underline{64.44}      \\
\textbf{RoI (Ours)}              & \xmark          & 38.14          & \textbf{74.20} & \textbf{82.14} & \underline{49.04}      & \textbf{75.35} & \textbf{37.80} & \underline{95.10}    & \textbf{64.54}   \\
\bottomrule
\end{tabular}
\end{adjustbox}
\caption{Comparative evaluation of zero-shot accuracy across multiple benchmark datasets for various pruning methods on the Qwen2.5 model family at varying scales with 2:4 sparsity pattern. \textbf{Bold} and \underline{underlined} values indicate the highest and second-highest performance, respectively. The `W/U' column denotes whether weight updates are applied during pruning. ProxSparse fails to converge for the 3B model.}
\label{tab:acc_2:4}
\end{table*}

\subsection{Experimental Setting}
\textbf{LLM backbones}: The proposed method is evaluated 
on the Qwen2.5 model family~\cite{abs-2412-15115} at multiple scales, from 0.5B to 7B, to assess its stability and scalability under semi-structured pruning. All main experiments adopt a 2:4 sparsity pattern, compatible with NVIDIA hardware acceleration. Training is performed for 2,000 steps with a batch size of 256 and a sequence length of 4096, processing approximately 2B tokens in total. The training data is drawn from the Nemotron-CC-v2 corpus~\cite{nvidia2025nvidianemotronnano2}, a clean English dataset suitable for pre-training modern LLMs. We detail hyperparameters in Appendix~\ref{apx:hyper}.
\\
\textbf{Baselines}:
To assess the effectiveness of the proposed method, we select five representative semi-structured pruning baselines spanning both classical and modern approaches: Magnitude~\cite{HanPTD15}, Wanda~\cite{Sun0BK24}, SparseGPT~\cite{FrantarA23}, ProxSparse~\cite{LiuSJPHS0K25}, and MaskLLM~\cite{FangYMHPK0W24}. For ProxSparse, we use the same amount of training data as RoI and MaskLLM to ensure fair comparisons, even though ProxSparse typically requires less data. Performance across different data scales is reported in Section~\ref{sec:scalability}, and while detailed methodological descriptions of baselines are provided in Appendix~\ref{apx:baseline}.
\subsection{Main Results}
% ACC
Following prior work, we begin by evaluating zero-shot accuracies on a broad suite of standard benchmark datasets (Table~\ref{tab:acc_2:4}). Across all model scales and evaluation tasks, sparsity-mask learning approaches (ProxSparse, MaskLLM, and RoI) consistently outperform more saliency-based pruning baselines, with only minor exceptions.
Among these methods, approaches that explicitly learn the pruning mask (MaskLLM and RoI) via variational optimization substantially exceed deterministic regularization-based learning (ProxSparse), improving accuracy by 2.56\footnote{This average is computed from the differences between the `\texttt{Average}' accuracies of MaskLLM/RoI and ProxSparse across all model scales reported in Table~\ref{tab:acc_2:4} (excluding Qwen2.5-3B where ProxSparse fails to converge).} percentage points on average across model scales and benchmarks. Despite RoI and MaskLLM both achieving comparable results (e.g. 64.54\% vs.\ 64.44\% average accuracy for the 7B model), RoI incurs substantially lower pruning overhead. Specifically, for 2:4 sparsity, RoI reduces the number of trainable parameters by approximately 33\%, using only 1$\times$ the original model's parameter count, compared to 1.5$\times$ for MaskLLM (illustrated in Table~\ref{tab:num-parameters}).

% PPL
\begin{table}[ht]
\centering
\begin{adjustbox}{max width=\linewidth}
\begin{tabular}{lcccc}
\toprule
& \multicolumn{4}{c}{\textbf{Model Sizes}} \\ \cmidrule(lr){2-5}
\textbf{Method}      & \textbf{0.5B}  & \textbf{1.5B}  & \textbf{3B}    & \textbf{7B}    \\
\midrule
\rowcolor{headergray}
\textbf{w/o Pruning} & 18.45          & 12.59          & 12.32          & 9.03           \\
Magnitude            & 3.0E+4& 5.2E+3& 1.6E+6& 6.5E+3\\
Wanda                & 159.05         & 67.97          & 37.31          & 20.87          \\
SparseGPT            & 73.78          & 38.95          & 22.82          & 16.44          \\
ProxSparse           & 146.42         & 28.68          & -          & \underline{14.91}    \\
MaskLLM              & \underline{35.16}    & \underline{21.90}    & \underline{19.85}    & \underline{14.91}    \\
\midrule
\textbf{RoI (Ours)} & \textbf{32.63} & \textbf{21.27} & \textbf{19.37} & \textbf{14.89} \\
\bottomrule
\end{tabular}
\end{adjustbox}
\caption{Perplexity (PPL) scores on WikiText-2 benchmark dataset and training flops (FLOPs) across backbone models with 2:4 sparsity pattern setting. \textbf{Bold} and \underline{underlined} values indicate the best and second-best performance, respectively.}
\label{tab:perplexity_PPL_2:4}
\end{table}

Additionally, Table~\ref{tab:perplexity_PPL_2:4} reports perplexity (PPL) values on the WikiText-2 dataset, further highlighting the advantages of our proposed method:
\textbf{(i) Effectiveness of Differentiable Subset Sampling:}
RoI consistently outperforms all competing baselines at every model scale, indicating that its differentiable subset sampling mechanism reliably discovers high-quality sparsity patterns that better preserve model performance after pruning.
\textbf{(ii) Scalability across Model Sizes:}
RoI inherits the strong generalization properties of MaskLLM and delivers consistent improvements as the model scale increases, enabled by a closely related formulation and optimization strategy. RoI maintains its advantage over simpler methods from small to large models: achieving 14.89 perplexity at 7B parameters, compared to 20.87 for Wanda and 16.44 for SparseGPT. This stable scaling behavior contrasts sharply with Magnitude pruning - which attains its best perplexity at 1.5B parameters rather than at 7B, further emphasizing RoI’s ability to maintain reliable performance as model capacity grows. In contrast, ProxSparse exhibits unstable optimization behavior at larger scales. In particular, when applied to the Qwen2.5-3B model, ProxSparse fails to reliably converge, preventing it from achieving competitive performance.
\textbf{(iii) Robustness of Learnable Sparsity-Mask Approaches:}
Magnitude pruning performs poorly (consistently yielding perplexity values exceeding 5,000), and other naive baselines fall far behind methods that explicitly learn the pruning mask, reaffirming that simple pruning strategies substantially impair language modeling performance. The strong results of RoI and MaskLLM underscore the importance of principled, learnable pruning mechanisms.

% Training Flops
RoI and MaskLLM adopt similar problem formulations and largely comparable optimization procedures. Although RoI overall outperforms MaskLLM in both zero-shot accuracy and perplexity, the raw performance gap remains relatively modest. The primary advantage of RoI instead lies in its significantly lower training cost, as illustrated in Table~\ref{tab:num-parameters}, which compares the number of trainable parameters required by each method.
In particular, the training cost of MaskLLM grows rapidly with model size. Even under 2:4 sparsity, MaskLLM requires 1.5$\times$ more trainable parameters than RoI, a difference that may appear moderate in ratio but translates into a large absolute increase at scale. For the Qwen2.5-7B model, achieving 2:4 sparsity with MaskLLM requires $\approx$9.75B additional trainable parameters.

\begin{table}[t]
\centering
\begin{adjustbox}{max width=\linewidth}
\begin{tabular}{lcccc}
\toprule
& \multicolumn{4}{c}{\textbf{Model Sizes}} \\ \cmidrule(lr){2-5}
\textbf{Method} & \textbf{0.5B} & \textbf{1.5B} & \textbf{3B} & \textbf{7B} \\
\midrule
\rowcolor{headergray}
\multicolumn{5}{c}{\textbf{2:4 Sparsity}} \\
ProxSparse & 0.4B (1$\times$)& 1.3B (1$\times$)& 2.7B (1$\times$)& 6.5B (1$\times$)\\
MaskLLM   & 0.6B (1.5$\times$)& 2.0B (1.5$\times$)& 4.0B (1.5$\times$)& 9.8B (1.5$\times$)\\
\textbf{RoI (Ours)}      & 0.4B (1$\times$)& 1.3B (1$\times$)& 2.7B (1$\times$)& 6.5B (1$\times$)\\
\midrule
\rowcolor{headergray}
\multicolumn{5}{c}{\textbf{2:8 Sparsity}} \\
MaskLLM   & 1.4B (3.5$\times$)& 4.6B (3.5$\times$)& 9.5B (3.5$\times$)& 22.8B (3.5$\times$)\\
\textbf{RoI (Ours)}      & 0.4B (1$\times$)& 1.3B (1$\times$)& 2.7B (1$\times$)& 6.5B (1$\times$)\\
\bottomrule
\end{tabular}
\end{adjustbox}
\caption{Number of trainable parameters (in billions) under different sparsity patterns and model sizes. Values in parentheses indicate the ratio relative to the original dense model parameters (excluding embeddings). For mask-learning methods, only mask parameters are trainable while the underlying model weights remain frozen.}
\label{tab:num-parameters}
\end{table}

% ====================== Detailed Analysis ====================== %
\subsection{Detailed Analysis}

\subsubsection{2:8 Sparsity as a Failure Regime}

To further evaluate the generalization of RoI, we conduct a stress test under the 2:8 configuration. Because 2:8 sparsity is not broadly supported by mainstream accelerators, this experiment is intended for scalability evaluation rather than deployment. This aggressive sparsity exposes the challenges faced by semi-structured pruning methods, particularly those that do not learn pruning masks. % Under this regime, we directly compare RoI with MaskLLM to highlight pruning efficiency and effectiveness.

As shown in Table~\ref{tab:perplexity_PPL_2:8}, only methods that learn pruning masks achieve reasonable perplexity values, while saliency-based baselines collapse, producing perplexities exceeding 100. This behavior is congruent with the 2:4 setting and highlight the critical necessity of learnable mask parameterization. Accuracy results under 2:8 sparsity also exhibit the same trend and are deferred to Appendix~\ref{apx:acc_2:8}.

\begin{table}[t]
\centering
\begin{adjustbox}{max width=\linewidth}
\begin{tabular}{lcccc}
\toprule
& \multicolumn{4}{c}{\textbf{Model Sizes}} \\ \cmidrule(lr){2-5}
\textbf{Method}      & \textbf{0.5B}  & \textbf{1.5B}  & \textbf{3B}    & \textbf{7B}    \\
\midrule
\rowcolor{headergray}
\textbf{w/o Pruning} & 18.45          & 12.59          & 12.32          & 9.03           \\
Magnitude            & 1.5E+7& 1.0E+7& 6.6E+7& infinity\footnotemark       \\
Wanda                & 8.1E+5& 1.0E+6& 3.2E+6& 4.7E+4\\
SparseGPT            & 1.1E+4& 4.0E+4& 257.22         & 158.70         \\
MaskLLM              & \underline{63.11}    & \underline{38.14}    & \underline{38.89}    & \underline{34.72}    \\
\midrule
\textbf{RoI (Ours)} & \textbf{61.89} & \textbf{36.87} & \textbf{37.72} & \textbf{32.33} \\
\bottomrule
\end{tabular}
\end{adjustbox}
\caption{WikiText-2 Perplexity (PPL) scores across backbone models with 2:8 sparsity pattern setting. \textbf{Bold} and \underline{underlined} values indicate the best and second-best performance, respectively. ProxSparse is excluded due to its inability to learn 2:8 sparsity patterns.}
\label{tab:perplexity_PPL_2:8}
\end{table}

In conjunction with Table~\ref{tab:num-parameters}, we again observe the rapid growth in training cost of MaskLLM as model size increases. Moreover, the parameter-requirement gap between RoI and MaskLLM widens substantially under more aggressive sparsity. For the 7B model at 2:8 sparsity, MaskLLM requires more than twice the auxiliary parameters compared to the 2:4 setting. This corresponds to $\approx$22.75B additional parameters relative to RoI, resulting in a 3.5$\times$ increase in parameters participating in memory allocation, backpropagation, optimizer and scheduler states, while only achieving perplexity levels comparable to RoI.

These results demonstrate that differentiable subset sampling is sufficient to sustain language modeling capability under extreme sparsity, while requiring only 28.57\% of the trainable parameters of the categorical parameterization. This establishes RoI as the minimal learnable-mask formulation capable of minimizing accuracy loss.

\footnotetext{Undefined perplexity due to zero-probability predictions.}

\subsubsection{Perplexity over Training Tokens}

Among mask-learning pruning approaches, deterministic regularization-based learning fundamentally limits how effectively a model can explore in the parameter space due to the bias-variance tradeoff. While regularization can accelerate early-stage optimization, it introduces a persistent bias that constrains convergence as training data grows. In contrast, directly learning sparsity masks in a variational manner avoids this limitation and is therefore intrinsically better suited to large-scale learning regimes. This difference is directly reflected in Figure~\ref{fig:perplexity-per-training-tokens}. During the early stage of training (until approximately 0.5B tokens), the regularization-based pruning baseline ProxSparse attains lower perplexity. However, this advantage is temporary: as additional training data is incorporated, the performance of ProxSparse rapidly plateaus. Meanwhile, RoI and MaskLLM continue to improve steadily and monotonically, consistently translating additional data into lower perplexity. Although RoI and MaskLLM optimize the same variational objective, RoI exhibits stronger scalability in practice; see Appendix~\ref{apx:convergence-explanation} for a detailed analysis. These results provide clear empirical evidence that learning masks directly is critical for unlocking improvements under large-scale learning settings and achieving superior asymptotic performance.

\label{sec:scalability}
\begin{figure}[!t]
    \centering
    \includegraphics[width=\columnwidth]{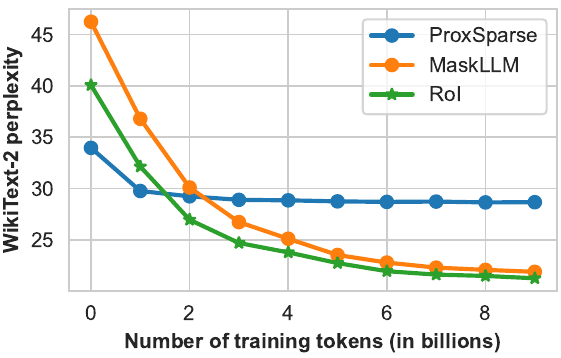}
    \caption{RoI demonstrates better scalability than MaskLLM and ProxSparse, with WikiText-2 perplexity (on the Qwen2.5-1.5B model) improving more consistently as the number of training tokens increases.}
    \label{fig:perplexity-per-training-tokens}
\end{figure}

\section{Conclusion}
This work introduced RoI, a novel semi-structured pruning technique that employs differentiable subset sampling to efficiently derive N:M sparsity masks. By substantially reducing the number of trainable parameters and associated memory overhead, RoI enables effective compression of LLMs while maintaining strong downstream performance. Across Qwen2.5 models ranging 0.5 to 7B parameters, RoI consistently achieves lower perplexity on WikiText-2 compared to existing pruning approaches, and maintains competitive zero-shot accuracy on a diverse set of benchmarks. Moreover, RoI demonstrates enhanced data efficiency and favorable scalability as the amount of calibration data increases. Overall, these results position RoI as a practical and scalable solution for semi-structured pruning of LLMs, offering a strong balancing between compression efficiency and performance preservation for resource-constrained deployment environments.

\section*{Limitations}

Despite the promising performance and efficiency demonstrated by RoI, several limitations remain: (i)
First, the deployment of semi-structured sparsity is inherently hardware-dependent. At present, substantial throughput gains are realized only on select platforms (e.g., AMD ROCm and certain NVIDIA Ampere and Hopper GPUs) where 2:4 structured sparsity is natively supported and accelerated at the kernel level. Although RoI can, in principle, be extended to arbitrary $N\!:\!M$ sparsity patterns, its practical utility is constrained by the absence of hardware kernels and vendor-optimized libraries for ratios other than 2:4. On accelerators or CPUs lacking such specialized support, pruning yields only marginal reductions in memory footprint and fails to deliver meaningful inference speed up. This hardware dependency poses a significant challenge for widespread adoption in heterogeneous production environments, where deployment targets may vary; (ii)
Second, the current evaluation focuses exclusively on English-centric, dense models and a limited set of standard NLP benchmarks. Future research should investigate the applicability of RoI to multilingual LLMs, different architectures like Mixture-of-Experts, larger-scale models, and domain-specific tasks (e.g., code generation, reasoning-intensive applications) to assess its generalization and scalability comprehensively.

\iffalse
\section*{Acknowledgments}

% This document has been adapted
% by Steven Bethard, Ryan Cotterell and Rui Yan
% from the instructions for earlier ACL and NAACL proceedings, including those for
% ACL 2019 by Douwe Kiela and Ivan Vuli\'{c},
% NAACL 2019 by Stephanie Lukin and Alla Roskovskaya,
% ACL 2018 by Shay Cohen, Kevin Gimpel, and Wei Lu,
% NAACL 2018 by Margaret Mitchell and Stephanie Lukin,
% Bib\TeX{} suggestions for (NA)ACL 2017/2018 from Jason Eisner,
% ACL 2017 by Dan Gildea and Min-Yen Kan,
% NAACL 2017 by Margaret Mitchell,
% ACL 2012 by Maggie Li and Michael White,
% ACL 2010 by Jing-Shin Chang and Philipp Koehn,
% ACL 2008 by Johanna D. Moore, Simone Teufel, James Allan, and Sadaoki Furui,
% ACL 2005 by Hwee Tou Ng and Kemal Oflazer,
% ACL 2002 by Eugene Charniak and Dekang Lin,
% and earlier ACL and EACL formats written by several people, including
% John Chen, Henry S. Thompson and Donald Walker.
% Additional elements were taken from the formatting instructions of the \emph{International Joint Conference on Artificial Intelligence} and the \emph{Conference on Computer Vision and Pattern Recognition}.
\fi
% Bibliography entries for the entire Anthology, followed by custom entries
%\bibliography{anthology,custom}
% Custom bibliography entries only
\bibliography{custom}

\appendix

\section{Appendix}
\subsection{WRS Yields Equivalent Variational Objective}
% TODO: Relocate this proof section
\begin{theorem}
\label{thm:restricted-sampling}
    Let $\Pr(\mathbf{m}_i | \boldsymbol{\phi}_i)$ be the target distribution of each mask $\mathbf{m}_i$ as defined in Equation~\ref{eq:full_dist_definition}. The expected loss obtained by sampling each mask from its target distribution is equal to the expected loss obtained when each mask is parameterized as a sum of elements of an ordered subset $\mathcal{S}_i$ sampled from the corresponding restricted distribution $\Pr_\text{\textnormal{WRS}}(\mathcal{S}_i | \boldsymbol{\phi}_i)$.
\end{theorem}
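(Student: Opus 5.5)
The plan is to reduce the statement to the law of total expectation (a change of variables through the deterministic map $\mathcal{S}_i \mapsto \sum_j \boldsymbol{\mu}_i^{(j)}$), applied one group at a time and then lifted to the product distribution.

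\textbf{Step 1: the summation map.} Define $f(\mathcal{S}_i) = \sum_{j=1}^{N} \boldsymbol{\mu}_i^{(j)}$. Because WRS samples without replacement, the $N$ one-hot vectors in any ordered subset with positive probability have distinct supports. Hence $f(\mathcal{S}_i)$ is always a valid $N$-hot mask. The preimage $f^{-1}(\mathbf{m}_i)$ is exactly the collection $\mathcal{S}_{\mathbf{m}_i}$ of Equation~\ref{eq:full_dist_definition}, namely the $N!$ orderings of the support of $\mathbf{m}_i$. These preimages partition the space of ordered subsets as $\mathbf{m}_i$ ranges over all $\binom{M}{N}$ masks.

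\textbf{Step 2: the pushforward equals the target distribution.} With the partition in hand, Equation~\ref{eq:full_dist_definition} says precisely that $\Pr(\mathbf{m}_i|\boldsymbol{\phi}_i) = \Pr_\text{WRS}(f^{-1}(\mathbf{m}_i)|\boldsymbol{\phi}_i)$. So the target distribution is the pushforward of $\Pr_\text{WRS}$ under $f$. As a sanity check, summing the WRS probabilities in Equation~\ref{eq:orig-wrs} over all ordered subsets gives $1$, so both sides are normalized.

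\textbf{Step 3: extend to the joint.} Both the target joint and $\Pr_\text{WRS}(\mathcal{S}|\boldsymbol{\Phi})$ factorize over the $G$ groups. The map $F(\mathcal{S}) = (f(\mathcal{S}_1),\dots,f(\mathcal{S}_G))$ therefore pushes the product WRS measure forward to $\prod_i \Pr(\mathbf{m}_i|\boldsymbol{\phi}_i)$.

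\textbf{Step 4: rewrite the expectation.} Write $\ell(\mathbf{M}) = \mathcal{L}_\text{CE}(\mathcal{D};\mathbf{W}\odot\mathbf{M})$. Regroup the finite sum by preimages:
\begin{equation*}
\mathbb{E}_{\Pr_\text{WRS}(\mathcal{S}|\boldsymbol{\Phi})}[\ell(F(\mathcal{S}))] = \sum_{\mathbf{M}} \ell(\mathbf{M}) \sum_{\mathcal{S}\in F^{-1}(\mathbf{M})} \prod_{i=1}^G \Pr_\text{WRS}(\mathcal{S}_i|\boldsymbol{\phi}_i).
\end{equation*}
Since $F^{-1}(\mathbf{M}) = \prod_i \mathcal{S}_{\mathbf{m}_i}$ is a Cartesian product, the inner sum factorizes into $\prod_i \sum_{\mathcal{S}_{\mathbf{m}_i}} \Pr_\text{WRS}(\mathcal{S}_{\mathbf{m}_i}|\boldsymbol{\phi}_i)$. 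This equals $\prod_i \Pr(\mathbf{m}_i|\boldsymbol{\phi}_i)$, which is exactly $\mathbb{E}_{\Pr(\mathbf{M}|\boldsymbol{\Phi})}[\ell(\mathbf{M})]$. All sums are finite, so no convergence issues arise.

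The main point needing care is Step 1. The statement concerns discrete (hard) sampling, where each $\boldsymbol{\mu}_i^{(j)}$ is an exact one-hot vector, as in the Gumbel-Top-$K$ sampler with $\tau \to 0$. For this sampler I will invoke the known equivalence of \citet{XieE19} between Gumbel-Top-$K$ and WRS over ordered subsets. I will also state explicitly that the relaxed softmax vectors are excluded from the claim, since they only approximate it.
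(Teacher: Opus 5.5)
Your proposal is correct and follows the same argument as the paper: expand $\Pr(\mathbf{m}\mid\boldsymbol{\phi})$ as the sum of WRS probabilities over the fiber $\mathcal{S}_{\mathbf{m}}$, then regroup the finite double sum into a single sum over all ordered subsets. Your version is more careful than the paper's in two places: you make explicit that these fibers partition the ordered subsets (the paper's ``reordering the sums'' uses this silently), and you lift the single-group identity to the product over $G$ groups. Your appeal to the Gumbel-Top-$K$/WRS equivalence is not needed, since the theorem is stated directly in terms of $\Pr_\text{WRS}$.
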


We show that:
\begin{equation}
    \mathbb{E}_{\Pr(\mathbf{m} | \boldsymbol{\phi})} [f\!\left(\mathbf{m}\right)] = \mathbb{E}_{\Pr_\text{WRS}(\mathcal{S} | \boldsymbol{\phi})} \left[ f\!\left( \sum_{\boldsymbol{\mu} \in \mathcal{S}} \boldsymbol{\mu} \right) \right],
\end{equation}
where $f(\mathbf{m})$ is an objective function of $\mathbf{m}$, 
and $\Pr(\mathbf{m} | \boldsymbol{\phi}) = \sum_{\mathcal{S}_\mathbf{m}} \Pr_\text{WRS}(\mathcal{S}_\mathbf{m} | \boldsymbol{\phi})$
with $\mathcal{S}_{\mathbf{m}}$ ranging over all sets whose elements sum to $\mathbf{m}$.

\begin{proof}
Given any set of binary masks $\mathcal{M}$ satisfying $N$:$M$ sparsity, by definition of $\Pr(\mathbf{m}\mid\boldsymbol{\phi})$:
\begin{equation*}
\begin{aligned}
    &\mathbb{E}_{\Pr(\mathbf{m}\mid\boldsymbol{\phi})}[f(\mathbf{m})]
    =
    \sum_{\mathbf{m}\in\mathcal{M}} 
        \Pr(\mathbf{m}\mid\boldsymbol{\phi})\, f(\mathbf{m}) \\
    &=
    \sum_{\mathbf{m}\in\mathcal{M}}
        \left(
        \sum_{\mathcal{S}_{\mathbf{m}}}
                \Pr_{\mathrm{WRS}}(\mathcal{S}_{\mathbf{m}}\mid\boldsymbol{\phi})
        \right)
        f\!\left(\sum_{\boldsymbol{\mu}\in\mathcal{S}_{\mathbf{m}}}\boldsymbol{\mu}\right).
\end{aligned}
\end{equation*}
Reordering the sums yields:
\begin{equation*}
\begin{aligned}
    &\sum_{\mathbf{m}\in\mathcal{M}}
    \sum_{\mathcal{S}_{\mathbf{m}}}
        \Pr_{\mathrm{WRS}}(\mathcal{S}_{\mathbf{m}}\mid\boldsymbol{\phi})\,
        f\!\left(\sum_{\boldsymbol{\mu}\in\mathcal{S}_{\mathbf{m}}}\boldsymbol{\mu}\right) \\
    &=
    \sum_{\mathcal{S}}
        \Pr_{\mathrm{WRS}}(\mathcal{S}\mid\boldsymbol{\phi})\,
        f\!\left(\sum_{\boldsymbol{\mu}\in\mathcal{S}}\boldsymbol{\mu}\right),
\end{aligned}
\end{equation*}
which equals:
\begin{equation*}
\mathbb{E}_{\Pr_{\mathrm{WRS}}(\mathcal{S}\mid\boldsymbol{\phi})}
\!\left[
f\!\left(\sum_{\boldsymbol{\mu}\in\mathcal{S}}\boldsymbol{\mu}\right)
\right].
\end{equation*}
\end{proof}
% Intuitively, this follows because 
 
\subsection{Baseline Methods}
\label{apx:baseline}
i)~\textbf{Magnitude} is a simple, data-free method that removes parameters with the smallest absolute values. While this method is easy to implement, it often yields subpar results due to the limitations of parameter sensitivity and model dynamics;
ii)~\textbf{Wanda}~\cite{Sun0BK24} combines parameter magnitudes with activation statistics at each layer, achieving better performance than pure magnitude pruning, especially under higher sparsity constraints, while maintaining favorable computational efficiency;
iii)~\textbf{SparseGPT}~\cite{FrantarA23} incorporates activation outputs and Hessian information to estimate parameter importance, followed by parameter updates to further reduce output error. This method can yield higher accuracy but is more computationally demanding;
iv)~\textbf{ProxSparse}~\cite{LiuSJPHS0K25} leverages regularization to quickly find sparsity masks under very constrained data and computational settings. Owing to its deterministic optimization procedure, ProxSparse typically converges rapidly in the early stages of training. However, this early convergence can limit its ability to further explore the solution space, thereby restricting its capacity to fully exploit additional training data or extended optimization; and
v)~\textbf{MaskLLM}~\cite{FangYMHPK0W24} is quite similar to the proposed method in this study, by learning pruning masks with minimizing calibration loss under an $N$:$M$ sparsity constraint, modeled via a multinomial distribution. It delivers strong performance across benchmarks but suffers from high computational cost. 

\subsection{Evaluation Metrics and Benchmark Datasets}
\label{apx:metrics-and-datasets}
Following previous works in this research field, three automated metrics are considered for the evaluation, including both quantitative and qualitative metrics to capture the full impact of pruning: i) \textit{Task Accuracy (ACC)}: on common NLP tasks such as question answering in reading comprehension, mathematics, and science. These tasks are typically assessed in zero-shot or few-shot settings using benchmark datasets; \textit{Perplexity (PPL)}: is a standard metric for assessing language model quality. 
It measures how well the model predicts the next word in a sequence, with lower values indicating better predictive performance.

\begin{table}[ht]
\centering
\begin{adjustbox}{max width=\columnwidth}
\begin{tabular}{lcl}
\toprule
% \rowcolor{darkgray}
\textbf{Dataset} & \textbf{Questions} & \textbf{Task Type}  \\
\midrule
ARC-Easy & 2,376	&Multiple-choice science	 \\
ARC-Challenge  &	1,172 &	Multiple-choice science\\ 
BoolQ & 3,270 &  Yes/no questions \\
HellaSwag &	10,042	&Sentence completion		\\
PIQA &	1,838	&Physical interaction QA	\\
RACE &	1,045	&Multiple-choice comprehension \\
SciQ &	1,000	&Multiple-choice science\\
\bottomrule
\end{tabular}
\end{adjustbox}
\caption{Dataset statistics for zero-shot evaluation. We report the number of questions and the corresponding task type for each benchmark.}
\label{tab:data}
\end{table}

The benchmark datasets used to assess the effectiveness of pruning methods include WikiText-2~\cite{MerityX0S17} for perplexity evaluation and a range of NLP benchmark datasets for zero-shot evaluation, which cover diverse task types and reasoning requirements, including ARC~\cite{abs-1803-05457}, BoolQ~\cite{ClarkLCK0T19}, HellaSwag~\cite{ZellersHBFC19}, PIQA~\cite{BiskZLGC20}, SciQ~\cite{WelblLG17}, and RACE~\cite{LaiXLYH17}. All evaluations are conducted using the LM-Evaluation-Harness toolkit~\cite{eval-harness}.  Table~\ref{tab:data} provides a comprehensive summary of the datasets used for zero-shot evaluation across multiple tasks. These datasets span a range of domains, including commonsense reasoning, science question answering, and reading comprehension, thereby ensuring a rigorous and diverse assessment of pruning performance.

\subsection{Hyperparameter Setting}
\label{apx:hyper}

The hyperparameters used for training RoI are listed in Table~\ref{tab:hyper}. These settings were carefully chosen to balance convergence stability and computational efficiency across all evaluated models.

\begin{table}[ht]
\centering
\begin{adjustbox}{max width=\columnwidth}
\begin{tabular}{lc}
\toprule
% \rowcolor{darkgray}
\textbf{Parameter} & \textbf{Values} \\
\midrule
Initialization distribution &$\mathcal{N}(0,0.01)$		 \\

Gumbel-Softmax temperature &$\tau=1.0 \rightarrow 0.05$	 \\

Sampling temperature &$\lambda = 1.0 \rightarrow 0.002$		 \\

Weight decay & $0.05$	 \\

Learning rate &$ 10^{-3} \rightarrow 10^{-4} $		 \\

AdamW parameters & $\beta_1 = 0.9, \beta_2=0.95$ \\

Batch size & 256	 \\

Sequence length & 4096	 \\
Training steps & 2000	 \\
Annealing steps & $T_\text{anneal} = 1500$ \\
\bottomrule
\end{tabular}
\end{adjustbox}
\caption{Hyperparameter configuration used in training.}
\label{tab:hyper}
\end{table}

Specifically, model weights remain frozen during training. The variational distribution is initialized from a standard normal ($\mu = 0.0$, $\sigma = 0.01$), and a simulated annealing process gradually reduces randomness. For 1500 first training steps, temperatures $\tau$ and $\lambda$ exponentially decay from 1.0 to 0.05 and from 1.0 to 0.002, respectively. Optimization uses AdamW with a learning rate decaying from $1 \times 10^{-3}$ to $1 \times 10^{-4}$, weight decay value of 0.05, and $\beta_1$ = 0.9, $\beta_2$ = 0.95.

\subsection{Annealing Schedule Ablation}
\label{apx:annealing-ablation}

We further study the effect of the annealing schedule for the sampling temperature $\lambda$ and the Gumbel-Softmax temperature $\tau$. Table~\ref{tab:annealing-ablation} compares exponential and linear schedules with different final values $(\lambda_\text{min}, \tau_\text{min})$ on the Qwen2.5-1.5B model. Overall, exponential annealing consistently outperforms linear annealing in both perplexity and average zero-shot accuracy. This suggests that reducing the temperatures more aggressively in the early stage, followed by smaller refinements near the end of training, provides a better balance between exploration and convergence. Among the exponential schedules, $(0.001, 0.05)$ yields the best average zero-shot accuracy, while $(0.002, 0.05)$ gives the lowest perplexity. We therefore use the exponential schedule with $(\lambda_\text{min}, \tau_\text{min})=(0.002, 0.05)$ as the default setting, as it provides the strongest language modeling performance while maintaining competitive downstream accuracy.

\begin{table*}[ht]
\centering
\begin{adjustbox}{max width=\textwidth}
\begin{tabular}{lc|c|cccccccc}
\toprule
\textbf{Schedule} & \textbf{$(\lambda_\text{min},\tau_\text{min})$} & \textbf{PPL} $\downarrow$ & \textbf{ARC-C} & \textbf{ARC-E} & \textbf{BoolQ} & \textbf{HellaS.} & \textbf{PIQA} & \textbf{RACE} & \textbf{SciQ} & \textbf{Avg.} $\uparrow$ \\
\midrule
% \rowcolor{headergray}
% \multicolumn{10}{c}{\textbf{Exponential}} \\
\multirow{3}{*}{Exponential} & $(0.001, 0.05)$ & 21.57 & 28.50 & 63.47 & 62.75 & 40.06 & 70.67 & 34.26 & 89.00 & 55.53 \\
                             % & $(0.001, 0.01)$ & -     & -     & -     & -     & -     & -     & -     & -     & -     \\
                             & $(0.002, 0.05)$ & 21.27 & 29.37 & 63.01 & 61.93 & 39.29 & 69.31 & 32.44 & 89.90 & 55.04 \\
                             % & $(0.002, 0.01)$ & -     & -     & -     & -     & -     & -     & -     & -     & -     \\
                             & $(0.003, 0.05)$ & 22.29 & 29.18 & 61.99 & 63.85 & 39.24 & 68.55 & 33.68 & 88.30 & 54.97 \\
                             % & $(0.003, 0.01)$ & -     & -     & -     & -     & -     & -     & -     & -     & -     \\
\midrule
% \rowcolor{headergray}
% \multicolumn{10}{c}{\textbf{Linear}} \\
\multirow{3}{*}{Linear} & $(0.001, 0.05)$ & 26.80 & 28.18 & 60.85 & 61.70 & 38.42 & 68.21 & 31.83 & 87.70 & 53.84 \\
                        % & $(0.001, 0.01)$ & 21.27 & 29.37 & 63.01 & 61.93 & 39.29 & 69.31 & 32.44 & 89.90 & 55.04 \\
                        & $(0.002, 0.05)$ & 27.04 & 27.59 & 60.31 & 62.11 & 37.64 & 67.53 & 32.29 & 87.10 & 53.51 \\
                        % & $(0.002, 0.01)$ & 21.27 & 29.37 & 63.01 & 61.93 & 39.29 & 69.31 & 32.44 & 89.90 & 55.04 \\
                        & $(0.003, 0.05)$ & 27.51 & 28.51 & 61.99 & 61.25 & 37.07 & 67.12 & 32.29 & 87.10 & 53.62 \\
                        % & $(0.003, 0.01)$ & 21.27 & 29.37 & 63.01 & 61.93 & 39.29 & 69.31 & 32.44 & 89.90 & 55.04 \\
\bottomrule
\end{tabular}
\end{adjustbox}
\caption{Ablation results for different annealing schedules and final temperature values on the Qwen2.5-1.5B model. PPL denotes WikiText-2 perplexity, and Avg. denotes the average zero-shot accuracy over the downstream tasks.}
\label{tab:annealing-ablation}
\end{table*}

\subsection{Training Time Comparison}
\label{apx:training-time}

To complement the parameter-efficiency analysis in Table~\ref{tab:num-parameters}, we report the end-to-end training time of learnable semi-structured pruning methods across model scales in Table~\ref{tab:training-time}.

\begin{table}[H]
\centering
\begin{adjustbox}{max width=\columnwidth}
\begin{tabular}{lcccc}
\toprule
& \multicolumn{4}{c}{\textbf{Model Sizes}} \\ \cmidrule(lr){2-5}
\textbf{Method} & \textbf{0.5B} & \textbf{1.5B} & \textbf{3B} & \textbf{7B} \\
\midrule
ProxSparse & 12.05 & 16.62 & 32.66 & 52.41 \\
MaskLLM & 9.00 & 16.93 & 30.95 & 55.39 \\
\textbf{RoI (Ours)} & \textbf{7.12} & \textbf{13.17} & \textbf{24.36} & \textbf{46.10} \\
\bottomrule
\end{tabular}
\end{adjustbox}
\caption{Reported training time (in GPU hours) across various sizes of the Qwen 2.5 model family for learnable semi-structured pruning methods. Lower values indicate faster training.}
\label{tab:training-time}
\end{table}

All methods are trained under the same 2:4 sparsity setting on NVIDIA B300 GPUs and use the same training budget described in Appendix~\ref{apx:hyper}. The results show that RoI consistently requires less training time than both ProxSparse and MaskLLM across all evaluated model sizes. This advantage becomes more pronounced as the model scale increases: on the 7B model, RoI reduces training time from 52.41 hours to 46.10 hours compared with ProxSparse, and from 55.39 hours to 46.10 hours compared with MaskLLM. These results support the claim that the compact subset-sampling parameterization improves practical training efficiency in addition to reducing trainable parameters.

\subsection{Optimization Efficiency}
\label{apx:convergence-explanation}

Although RoI and MaskLLM optimize equivalent variational objectives over $N$:$M$ masks, they use different parameterizations of the same mask distribution. MaskLLM assigns a separate logit to each feasible mask configuration, requiring $C=\binom{M}{N}$ logits per pruning group. In contrast, RoI assigns one logit to each item in the group and induces a distribution over masks through weighted sampling without replacement, requiring only $M$ logits. This distinction does not change the target objective, but it changes the geometry of the optimization problem and the efficiency of gradient propagation.

Let $\mathcal{M}$ denote the set of all feasible $N$-hot masks in a group. A full categorical mask learner parameterizes
\begin{equation}
    \Pr_\text{cat}(\mathbf{m} | \boldsymbol{\alpha}) = \frac{\exp(\alpha_{\mathbf{m}})}{\sum_{\mathbf{m'}\in\mathcal{M}} \exp(\alpha_{\mathbf{m'}})},\ \mathbf{m}\in\mathcal{M},
\end{equation}
where each configuration has its own independent parameter $\alpha_{\mathbf{m}}$. Therefore, a gradient update observed for one sampled mask primarily updates that individual configuration. Information about the usefulness of a particular retained weight is not directly shared with other masks that contain the same weight. Since each weight participates in many feasible masks, this representation can require repeated observations of many related configurations before the optimizer consistently increases the probability of important weights.

RoI instead parameterizes the same decision through item-level logits $\boldsymbol{\phi}=[\phi_1,\ldots,\phi_M]$. The probability of an unordered mask $\mathbf{m}$ is induced by summing over WRS orderings as in Equation~\ref{eq:full_dist_definition}. Consequently, the gradient with respect to an item logit aggregates contributions from all sampled masks containing that item:
\begin{equation}
\begin{aligned}
    &\frac{\partial}{\partial \phi_j}
    \mathbb{E}_{\Pr_\text{RoI}(\mathbf{m} | \boldsymbol{\phi})}
    [\mathcal{L}(\mathbf{m})] \\
    &=
    \sum_{\mathbf{m}\in\mathcal{M}}
    \mathcal{L}(\mathbf{m})
    \frac{\partial \Pr_\text{RoI}(\mathbf{m} | \boldsymbol{\phi})}{\partial \phi_j}.
\end{aligned}
\end{equation}
Because $\phi_j$ influences every mask in which item $j$ can be selected, each update transfers evidence across a combinatorial family of masks rather than only a single categorical state. This parameter sharing provides a useful inductive bias for pruning: if a weight is consistently important, RoI can increase its selection probability across many masks simultaneously.

This explains the empirical convergence gap. The categorical MaskLLM parameterization is more expressive, but it is also higher-dimensional and less statistically efficient; optimization must distribute probability mass over $\binom{M}{N}$ configurations. RoI restricts the search to a structured WRS-induced family with only $M$ degrees of freedom, which reduces memory traffic, optimizer-state overhead, and Monte Carlo estimation noise. As a result, even though both methods correspond to equivalent variational objectives at the mask-distribution level, RoI follows a better-conditioned and more sample-efficient optimization path.

\subsection{Gumbel-Top-$K$ Algorithm}
The algorithm for Gumbel-Top-$K$ is illustrated in Algorithm~\ref{alg:algorithm}. Specifically, we provide a clear description of the Gumbel-Top-$K$ sampling procedure employed to enable differentiable mask learning. This formulation allows for efficient sampling of $K$ items without replacement while preserving differentiability for gradient-based optimization.
\
\begin{algorithm}[ht]
\caption{Gumbel-Top-$K$ Sampling Algorithm (Differentiable)}
\label{alg:algorithm}
\textbf{Input}: Set of candidates $\mathcal{X} = \{x_1, \ldots, x_L\}$ with corresponding logits $\boldsymbol{\phi} = [\phi_1, \ldots, \phi_L]$, number of samples $K$, temperature $\tau > 0$\\
\textbf{Output}: Soft $K$-hot selection vector $\mathbf{S} \in \mathbb{R}^N$
\begin{algorithmic}[1] %[1] enables line numbers
\FOR{$i \gets 1$ to $N$}
    \STATE $u_i \sim \operatorname{Uniform}(0, 1)$
    \STATE $g_i \gets -\log(-\log(u_i))$ \hfill \COMMENT{Gumbel noise}
    \STATE $\kappa_i \gets \phi_i + g_i$ 
    \hfill \COMMENT{Compute perturbed key}
\ENDFOR
\STATE $\boldsymbol{\alpha}^{(1)} \gets [\kappa_1, \ldots, \kappa_N]$
\FOR{$k \gets 1$ to $K$}
    \STATE $\boldsymbol{\mu}^{(k)} \gets \operatorname{softmax}\bigl(\boldsymbol{\alpha}^{(k)} / \tau\bigr)$
    \STATE $\boldsymbol{\alpha}^{(k+1)} \gets \boldsymbol{\alpha}^{(k)} + \log\bigl(1 - \boldsymbol{\mu}^{(k)}\bigr)$
\ENDFOR
\STATE $\mathbf{S} \gets \sum_{k=1}^{K} \boldsymbol{\mu}^{(k)}$ \hfill \COMMENT{Soft $K$-hot vector}
\STATE \textbf{return} $\mathbf{S}$
\end{algorithmic}
\end{algorithm}

\subsection{Comparison with Straight-Through Gumbel-Top-$K$}
We further examined whether adopting a straight-through (ST) Gumbel-Top-$K$ estimator benefits pruning performance. In this variant, the forward pass generates discrete masks by directly applying $\operatorname{argtop}_K$ over Gumbel-perturbed logits, while the backward pass propagates gradients through the continuous Gumbel-Softmax relaxation. This strategy enforces discretization earlier in training, which should be able to improve mask interpretability. However, our empirical results in Table~\ref{tab:ste} show that ST Gumbel-Top-$K$ leads to slightly inferior performance compared to the pure soft relaxation. 

\begin{table}[ht]
    \centering
    \begin{adjustbox}{max width=\columnwidth}
    \begin{tabular}{lcccc}
        \toprule
        & \multicolumn{2}{c}{\textbf{Qwen2.5-1.5B}} & \multicolumn{2}{c}{\textbf{Qwen2.5-3B}} \\ \cmidrule(lr){2-3} \cmidrule(lr){4-5}
        \textbf{Metric} & w STE & w/o STE (Ours) & w STE & w/o STE (Ours) \\
        \midrule
        PPL (↓) & 30.58& \textbf{21.27}& 23.75& \textbf{19.37}\\
        Avg. Acc (↑) & 53.10& \textbf{55.04}& 55.26& \textbf{58.02}\\
        \bottomrule
    \end{tabular}
    \end{adjustbox}
    \caption{Comparison of pruning results with 2:4 sparsity, both with and without ST Gumbel-Top-$K$ estimator (denoted as "w STE" and "w/o STE").}
    \label{tab:ste}
\end{table}

\begin{figure*}[!t]
    \centering
    \includegraphics[width=0.95\linewidth]{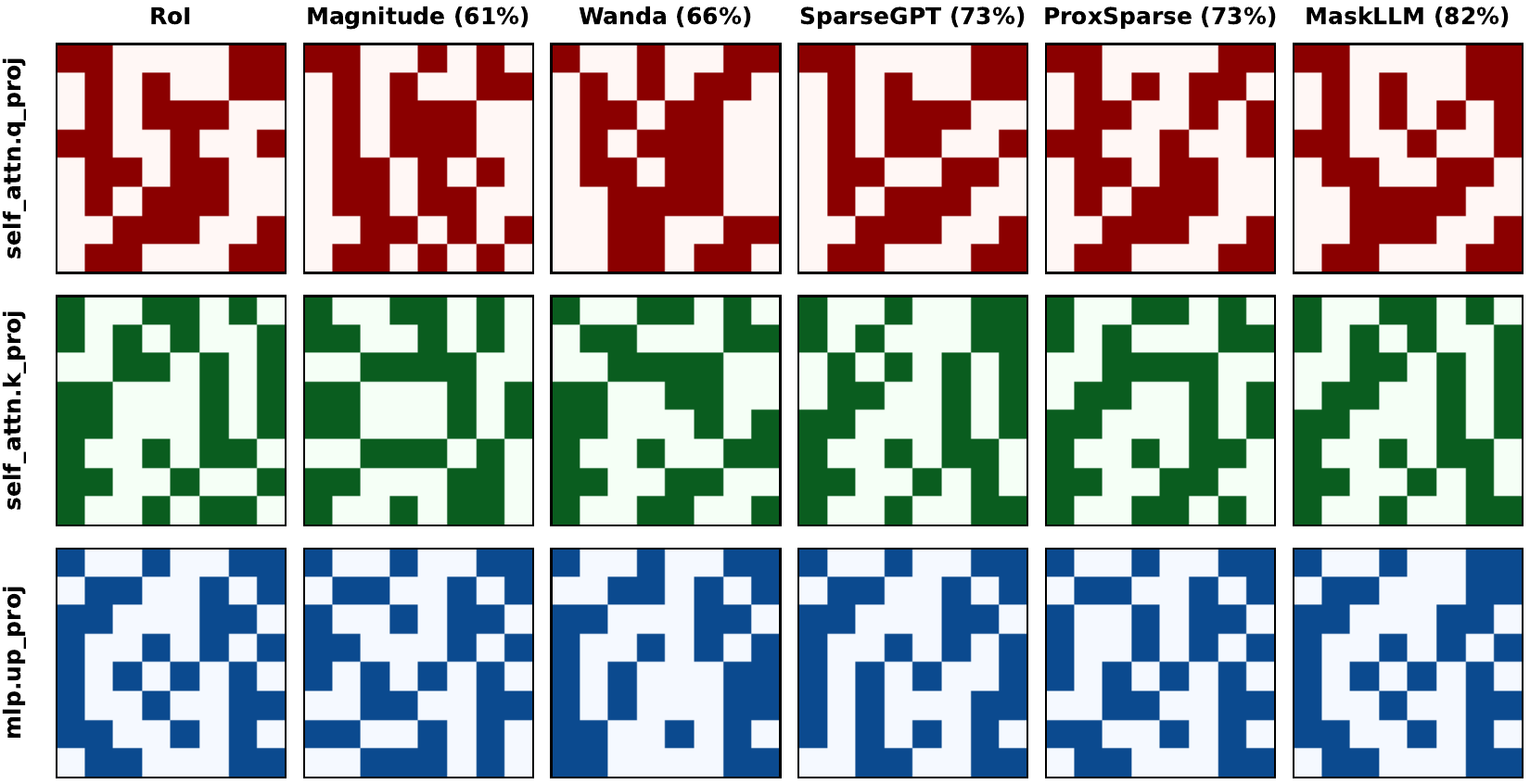}
    \label{fig:robustness}
    \caption{Mask difference comparison between RoI and previous works on the Qwen2.5-7B model. Besides the name of each baseline, place an overlapping percentage indicating the similarity of the produced masks between that baseline and RoI. Bold pixels indicate masks corresponding to retained weights.}
    \label{fig:mask_differences}
\end{figure*}

For instance, on Qwen2.5-1.5B, ST achieves 30.58 perplexity and 53.10\% average accuracy, while the soft approach (RoI) reaches 21.27 perplexity and 55.04\% accuracy. Similarly, on Qwen2.5-3B, the gap widens (23.57 vs. 19.37 perplexity). These observations suggest that the bias introduced by the ST estimator hampers generalization, outweighing the potential benefits of earlier discretization. Overall, the soft Gumbel-Top-$K$ relaxation used in RoI provides a more effective balance between trainability and performance.

\subsection{Throughput}
Table~\ref{tab:throughput} reports the decoding throughput of dense base models and their corresponding 2:4 sparse variants under different input/output token configurations. All models are deployed using vLLM with identical serving settings on a single NVIDIA B300 GPU to ensure a fair comparison.

\begin{table}[!h]
\begin{adjustbox}{max width=\linewidth}
\begin{tabular}{clcc}
\toprule
\textbf{I/O tokens}                 & \textbf{Models}         & \textbf{Throughput} & \textbf{Acceleration} \\
\midrule
\multirow{4}{*}{1024/4096} & Qwen2.5-3B     & 190.60     & 1.0x         \\
                           & Qwen2.5-3B 2:4 & 248.50     & 1.3x         \\ \cmidrule(lr){2-4}
                           & Qwen2.5-7B     & 165.00     & 1.0x         \\
                           & Qwen2.5-7B 2:4 & 210.10     & 1.27x        \\
\midrule
\multirow{4}{*}{2048/4096} & Qwen2.5-3B     & 185.20     & 1.0x         \\
                           & Qwen2.5-3B 2:4 & 241.40     & 1.3x         \\ \cmidrule(lr){2-4}
                           & Qwen2.5-7B     & 162.00     & 1.0x         \\
                           & Qwen2.5-7B 2:4 & 204.40     & 1.26x     
                              \\
\bottomrule
\end{tabular}
\end{adjustbox}
\caption{Throughput comparison between dense base models and their 2:4 sparse variants. All models are deployed using vLLM on a single NVIDIA B300 GPU. Throughput is reported under different input/output token configurations. The 2:4 sparse models consistently achieve higher throughput than their dense counterparts, demonstrating effective exploitation of semi-structured sparsity at inference time.}
\label{tab:throughput}
\end{table}

Across both Qwen2.5-3B and Qwen2.5-7B, the 2:4 sparse variants consistently achieve higher throughput than their dense counterparts. For the 1024/4096 I/O setting, the 2:4 models yield up to 1.3× speedup for Qwen2.5-3B and 1.27× speedup for Qwen2.5-7B. Similar improvements are observed under the 2048/4096 configuration, indicating that the throughput gains are robust across different sequence lengths.
These results demonstrate that semi-structured 2:4 sparsity can be effectively exploited at inference time, translating sparsity into tangible system-level acceleration without modifying the serving stack. Notably, the relative speedups remain stable across model scales, suggesting that the benefits of 2:4 sparsity generalize well from smaller to larger models.

\subsection{Mask Difference Analysis}
To investigate how different pruning strategies select weights, we measure the overlap between masks produced by various methods on the same model. 
Figure~\ref{fig:mask_differences} presents a qualitative comparison of the sparsity masks produced by RoI and several representative pruning baselines, including Magnitude, Wanda, SparseGPT, ProxSparse, and MaskLLM. For each method, we visualize the top-left 8$\times$8 sub-matrix of the learned masks from three representative layers: \texttt{self\_attn.q\_proj}, \texttt{self\_attn.k\_proj}, and \texttt{mlp.up\_proj}. Light pixels denote zero entries corresponding to pruned weights. In addition, we report the overlapping percentage between the masks generated by each baseline and RoI, indicating the degree of structural similarity.
Across all layers, RoI exhibits a clear alignment with MaskLLM, as reflected by the higher overlap ratios. This suggests that RoI is able to recover pruning patterns that are consistent with more expensive or data-intensive approaches, despite relying on a more efficient differentiable subset selection mechanism. In contrast, Magnitude pruning shows substantially lower overlap, indicating that purely weight-based criteria lead to structurally different sparsity patterns that deviate from those discovered by data-aware methods.
Notably, ProxSparse achieves an overlap comparable to SparseGPT in certain layers, but still demonstrates noticeable structural discrepancies, particularly in attention projections. This observation is consistent with ProxSparse’s deterministic and early-converging optimization behavior, which can restrict exploration of alternative sparsity configurations. Overall, this analysis highlights that RoI not only improves quantitative performance but also produces structured masks that closely resemble those of strong, data-aware pruning baselines, reinforcing its effectiveness in learning meaningful semi-structured sparsity patterns.

\begin{table*}[ht]
\centering
\begin{adjustbox}{max width=\textwidth}
\begin{tabular}{lcccccccc}
\toprule
\textbf{Method} & \textbf{W/U} & \textbf{ARC-E} & \textbf{BoolQ} & \textbf{HellaS.} & \textbf{PIQA} & \textbf{RACE} & \textbf{SciQ} & \textbf{Average} $\uparrow$ \\
\midrule
\rowcolor{headergray}
\textbf{Base Model: Qwen2.5-0.5B} & -    & 64.48          & 61.80          & 40.54          & 70.51          & 34.74          & 92.90          & 60.83          \\
Magnitude                         & \xmark  & 24.96          & \underline{38.72}    & 25.76          & 51.90          & 20.96          & 19.30          & 30.27          \\
Wanda                             & \xmark  & 26.22          & 37.83          & 26.43          & 54.84          & 23.54          & 29.40          & 33.04          \\
SparseGPT                         & \cmark & 28.37          & 38.56          & 26.65          & 53.05          & 24.11          & 41.70          & 35.41          \\
MaskLLM                           & \xmark  & \underline{42.75}    & \textbf{61.65} & \underline{26.92}    & \textbf{59.42} & \underline{25.86}    & \textbf{73.80} & \underline{48.40}    \\
\textbf{RoI (Ours)}              & \xmark  & \textbf{43.98} & \textbf{61.65} & \textbf{27.11} & \underline{59.36}    & \textbf{25.93} & \underline{73.10}    & \textbf{48.52} \\
\midrule
\rowcolor{headergray}
\textbf{Base Model: Qwen2.5-1.5B} & -    & 75.21          & 72.48          & 50.18          & 75.52          & 36.65          & 94.20          & 67.37          \\
Magnitude                         & \xmark  & 26.09          & 41.41          & 25.77          & 51.69          & 21.24          & 20.30          & 31.08          \\
Wanda                             & \xmark  & 25.76          & 47.98          & 25.65          & 51.63          & 23.44          & 22.00          & 32.74          \\
SparseGPT                         & \cmark & 28.54          & 54.31          & 26.52          & 53.16          & 24.69          & 45.20          & 38.74          \\
MaskLLM                           & \xmark  & \underline{50.12}    & \underline{58.87}    & \textbf{30.20} & \underline{63.32}    & \underline{27.80}    & \textbf{77.40} & \underline{51.29}    \\
\textbf{RoI (Ours)}              & \xmark  & \textbf{50.25} & \textbf{58.99} & \underline{29.97}    & \textbf{63.49} & \textbf{28.08} & \underline{77.20}    & \textbf{51.33} \\
\midrule
\rowcolor{headergray}
\textbf{Base Model: Qwen2.5-3B}   & -    & 77.57          & 77.22          & 55.04          & 78.29          & 38.47          & 95.90          & 70.42          \\
Magnitude                         & \xmark  & 24.24          & 37.83          & 25.59          & 52.23          & 20.77          & 19.70          & 30.06          \\
Wanda                             & \xmark  & 26.89          & 38.41          & 26.42          & 53.16          & 23.54          & 25.50          & 32.32          \\
SparseGPT                         & \cmark & 35.14          & 58.99          & 27.46          & 55.11          & 24.02          & 68.90          & 44.94          \\
MaskLLM                           & \xmark  & \underline{50.57}    & \underline{61.91}    & \textbf{30.70} & \underline{64.08}    & \underline{28.22}    & \textbf{82.00} & \underline{52.91}    \\
\textbf{RoI (Ours)}              & \xmark  & \textbf{51.01} & \textbf{62.32} & \underline{30.06}    & \textbf{64.69} & \textbf{28.80} & \underline{81.60}    & \textbf{53.08} \\
\midrule
\rowcolor{headergray}
\textbf{Base Model: Qwen2.5-7B}   & -    & 80.43          & 84.65          & 59.98          & 78.78          & 41.91          & 96.60          & 73.73          \\
Magnitude                         & \xmark  & 26.26          & 46.64          & 25.42          & 52.45          & 22.01          & 22.20          & 32.50          \\
Wanda                             & \xmark  & 27.48          & 37.83          & 26.35          & 53.59          & 22.68          & 32.60          & 33.42          \\
SparseGPT                         & \cmark & 32.91          & 59.33          & 27.87          & 55.28          & 27.37          & 75.20          & 46.33          \\
MaskLLM                           & \xmark  & \textbf{51.20} & \underline{60.98}    & \underline{29.32}    & \underline{62.90}    & \textbf{28.98} & \textbf{83.60} & \underline{52.83}    \\
\textbf{RoI (Ours)}              & \xmark  & \underline{50.67}    & \textbf{62.35} & \textbf{29.70} & \textbf{63.44} & \underline{28.85}    & \underline{83.40}    & \textbf{53.07} \\
\bottomrule
\end{tabular}
\end{adjustbox}
\caption{Comparative evaluation of zero-shot accuracy across multiple benchmark datasets for various pruning methods on the Qwen2.5 model family of different sizes with 2:8 sparsity pattern. \textbf{Bold} and \underline{underlined} values indicate the highest and second-highest performance, respectively. ProxSparse is excluded due to its inability to learn 2:8 sparsity patterns.}
\label{tab:acc_2:8}
\end{table*}

\subsection{Accuracy Results for 2:8 Sparsity}
\label{apx:acc_2:8}

Table~\ref{tab:acc_2:8_ARC-C} re-illustrates the severity of the 2:8 sparsity constraint. On ARC-Challenge, pruned models of all sizes and methods fail to exceed the 25\% random-guess baseline. Across the remaining benchmarks (Table~\ref{tab:acc_2:8}), heuristic pruning methods generally suffer substantial performance degradation, although SparseGPT retains non-trivial accuracy on several tasks. In contrast, approaches that explicitly learn the sparsity mask, namely RoI and MaskLLM, consistently achieve much higher average accuracy across model sizes.

\begin{table}[ht]
\centering
\begin{adjustbox}{max width=\linewidth}
\begin{tabular}{lcccc}
\toprule
& \multicolumn{4}{c}{\textbf{Model Sizes}} \\ \cmidrule(lr){2-5}
\textbf{Method}      & \textbf{0.5B} & \textbf{1.5B} & \textbf{3B} & \textbf{7B} \\
\midrule
\rowcolor{headergray}
\textbf{w/o Pruning} & 29.01         & 41.55         & 44.62       & 48.21       \\
Magnitude            & 20.99         & 21.93         & 23.21       & 21.50       \\
Wanda                & 19.80         & 19.62         & 19.37       & 19.88       \\
SparseGPT            & 20.48         & 19.80         & 17.92       & 18.69       \\
MaskLLM              & 18.88         & 21.51         & 20.81       & 21.41       \\
\textbf{RoI (Ours)} & 18.26         & 21.33         & 20.48       & 21.33       \\
\bottomrule
\end{tabular}
\end{adjustbox}
\caption{ARC-Challenge zero-shot accuracy scores across backbone models with 2:8 sparsity pattern setting. ProxSparse is excluded due to its inability to learn 2:8 sparsity patterns.}
\label{tab:acc_2:8_ARC-C}
\end{table}

Among saliency-based strategies, SparseGPT is the only method that preserves non-negligible performance, due to permitting parameter updates, and only on SciQ. These accuracy trends indicate that explicit mask learning is the most effective strategy. This observation is consistent with the 2:4 accuracy results in Table~\ref{tab:acc_2:4}, as well as the 2:4 and 2:8 perplexity results reported in Tables~\ref{tab:perplexity_PPL_2:4} and~\ref{tab:perplexity_PPL_2:8}. Collectively, these results further emphasize the advantages of RoI over MaskLLM, showing that RoI achieves competitive or superior performance at a fraction of the computational cost (Table~\ref{tab:num-parameters}), and remains robust across both tasks and sparsity regimes.

\subsection{Additional Results}

To evaluate whether the effectiveness of RoI generalizes beyond the Qwen2.5 model family, we further conduct experiments on Gemma2-9B~\cite{gemma2improving} under the 2:4 sparsity pattern. We compare RoI with both saliency-based methods and MaskLLM using WikiText-2 perplexity and zero-shot accuracy on seven downstream benchmarks. The results are reported in Table~\ref{tab:gemma2}.

\begin{table*}[ht]
\centering
\begin{adjustbox}{max width=\textwidth}
\begin{tabular}{l|c|cccccccc}
\toprule
\textbf{Method} & \textbf{PPL} $\downarrow$ & \textbf{ARC-C} & \textbf{ARC-E} & \textbf{BoolQ} & \textbf{HellaS.} & \textbf{PIQA} & \textbf{RACE} & \textbf{SciQ} & \textbf{Average} $\uparrow$ \\
\midrule
\rowcolor{headergray}
\textbf{Base Model: Gemma2-9B} & 10.54 & 61.52 & 87.16 & 84.16 & 61.20 & 81.18 & 42.49 & 97.20 & 73.56 \\
Magnitude & 537.39 & 34.39 & 68.31 & 69.79 & 42.46 & 70.78 & 33.21 & 88.10 & 58.15 \\
Wanda & 94.49 & 37.88 & 71.09 & 68.62 & 42.22 & 71.82 & 34.55 & 94.00 & 60.03 \\
SparseGPT & 68.54 & 36.35 & 69.49 & 72.54 & 44.25 & 71.82 & 36.17 & 93.70 & 60.62 \\
MaskLLM & \underline{17.25} & \underline{40.36} & \textbf{74.10} & \underline{73.08} & \underline{50.37} & \underline{75.41} & \underline{37.53} & \textbf{95.30} & \underline{63.73}  \\
RoI & \textbf{16.30} & \textbf{40.87} & \underline{73.82} & \textbf{73.55} & \textbf{50.98} & \textbf{75.79} & \textbf{38.28} & \textbf{95.30} & \textbf{64.08} \\
\bottomrule
\end{tabular}
\end{adjustbox}
\caption{Perplexity and zero-shot accuracy of different pruning methods on the Gemma-2-9B model under 2:4 sparsity. \textbf{Bold} and \underline{underlined} values indicate the best and second-best results among the pruning methods, respectively.}
\label{tab:gemma2}
\end{table*}

RoI achieves the lowest perplexity of 16.30, improving upon MaskLLM by 0.95 points and substantially outperforming the saliency-based baselines. It also obtains the highest average zero-shot accuracy of 64.08, compared with 63.73 for MaskLLM and 60.62 for the strongest saliency-based baseline, SparseGPT. Across individual tasks, RoI achieves the best or tied-best performance on six of the seven benchmarks, while remaining competitive on ARC-E. These results demonstrate that RoI transfers effectively to a different model family and consistently preserves language-modeling and downstream-task performance under semi-structured sparsity.

\end{document}